\newtheorem{theorem}{Theorem}
\newtheorem{lemma}{Lemma}
\newtheorem{remark}{Remark}
\newenvironment{proof}[1][Proof]{\textbf{#1.} }{\ \rule{0.5em}{0.5em}}
\newcommand{\argmin}{\mathop{\rm arg\min}}
\title{On Zeroth-Order Stochastic Convex Optimization via Random Walks}
\begin{document}
\date{}

\author[1]{Tengyuan Liang \thanks{tengyuan@wharton.upenn.edu}}
\author[2]{Hariharan Narayanan \thanks{harin@uw.edu}}
\author[1]{Alexander Rakhlin \thanks{rakhlin@wharton.upenn.edu}}
\affil[1]{Department of Statistics, The Wharton School, University of Pennsylvania}
\affil[2]{Department of Statistics and Department of Mathematics, University of Washington}

\maketitle

\begin{abstract}
	We propose a method for zeroth order stochastic convex optimization that attains the suboptimality rate of $\tilde{\mathcal{O}}(n^{7}T^{-1/2})$ after $T$ queries for a convex bounded function $f:{\mathbb R}^n\to{\mathbb R}$. The method is based on a random walk (the \emph{Ball Walk}) on the epigraph of the function. The randomized approach circumvents the problem of gradient estimation, and appears to be less sensitive to noisy function evaluations compared to noiseless zeroth order methods. 
\end{abstract}



\section{Introduction}

Let $f$ be a convex real-valued function on a closed convex domain $K\subset \mathbb{R}^n$. Within the oracle model of optimization, one sequentially obtains noisy information about this unknown function with the aim of computing an $\epsilon$-minimizer of $f$. In this paper we consider the setting of \emph{stochastic zeroth order optimization}: at step $t$, the oracle reveals a noisy value of the function at a point queried by the algorithm. This model is very basic, and can be viewed through the lens of learning: what is the amount of information one needs to collect in order to identify a near-optimal point, if all that is known about the objective is that it is convex?  

The amount of information can be quantified by the number of oracle calls, and it is known that this information-based complexity for minimization of a convex Lipschitz function $f$ to within an error $\epsilon>0$ scales as $\Omega(n^2\epsilon^{-2})$ \citep{Shamir12}. Yet, to the best of our knowledge, at present time there is no algorithm that comes even close to obtaining the desired $n^2$ dependence on the dimension while simultaneously having the $\epsilon^{-2}$ dependence on accuracy. The seminal work of \cite{NemYud83} introduces an optimization method for the noiseless zeroth order optimization with the $n^7$ dependence on the dimension, but the authors concede that extending this to the stochastic setting will worsen the power (this dependence is left as an unspecified polynomial in $n$). The noiseless zeroth-order method of \citep{NemYud83} was extended to the noisy case for the slightly harder problem of regret minimization in \citep{AgaFosHsuKakRak13siam}, where the authors proved an  $\tilde{\mathcal{O}}(n^{33}\epsilon^{-2})$ upper bound\footnote{The $\tilde{O}(\cdot)$ notation disregards polylogarithmic terms in $n$ and $1/\epsilon$.} on the \emph{regret} of the procedure, a more difficult objective. As a consequence, the same upper bound holds for the problem of optimization via averaging of the trajectory (see \citep{polyak1992acceleration, hazan2011beyond, AgaFosHsuKakRak13siam}). 

In the present paper, we describe a method with an $\tilde{\mathcal{O}}(n^{14}\epsilon^{-2})$ oracle complexity (or, equivalently, the $\tilde{\mathcal{O}}(n^{7}T^{-1/2})$ decay of suboptimality after $T$ steps). While not very practical for problems in high enough dimension, the method should be viewed as making progress towards closing the large theoretical gap. Further, the algorithm is based on random walks and is quite different from the classical techniques. These more classical approaches can be roughly divided into two categories: attempting to estimate the gradient using noisy function evaluations, or attempting to find a zero-th order method that is robust to noise. \cite{NemYud83} discuss the distinction between these two general plans of attack. The first is unlikely to yield the $1/\epsilon^2$ dependence on the accuracy, while the second appears to suffer from an adverse scaling with the dimension under noisy evaluations. The random walk approach can be viewed as yet another possible technique. While the present paper still leaves a large gap to the lower bound, there is hope for improvement using randomized methods such as a random walk. The reason we are optimistic about this approach is because randomized methods appear to be more robust to noise. Ideally, one would hope to use randomness in function evaluation as an asset rather than a disadvantage, thus ``riding on the noise''.

\begin{figure}[h]
	\centering
		\includegraphics[width=1.6in]{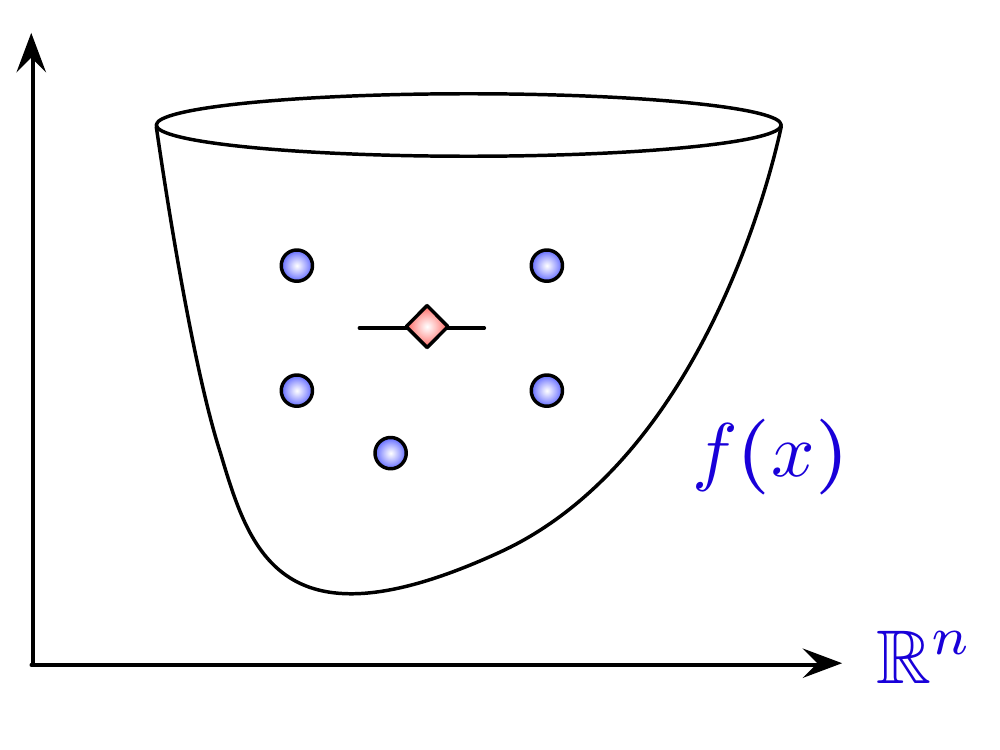}
	\caption{A nearly uniform sample is obtained via a random walk, and its average (red diamond) is computed. The convex set is then cut at the $y$-coordinate of the average point thus reducing the volume.}
	\label{fig:graphics_epigraph}
\end{figure}

Let us informally describe the method. We start with an $n+1$-dimensional convex body formed by the epigraph of the function and cut off at a value for the maximum of the function over the convex set $K$ (see Figure~\ref{fig:graphics_epigraph}). We run several Ball Walks on this body to generate near-uniform samples, in the spirit of the work of \cite{bertsimas2004solving}. Having obtained the samples, we take an average to approximate the center of mass, cut the convex set, and reshape. We continue in this fashion for $\tilde{\mathcal{O}}(n\log(1/\epsilon))$ iterations. The main technical difficulty is in analyzing the modified Ball Walk: unlike the related work of \cite{bertsimas2004solving} in the noiseless setting, we have a very restricted access to the convex body. To verify if the current point is inside the body, we adaptively sample the function value to obtain a confidence interval (in the ``vertical direction'') for the true value. If the confidence interval does not contain the current point, we proceed as if the inside/outside information were correct. Otherwise, we incur an additional error from not being able to resolve the question. Let us remark that our method is naturally parallelized, shaving off another factor of $n$ in terms of the number of queries per machine.


Let us mention very briefly that recent work has also considered a more restrictive oracle model, whereby one can obtain two function values with the same noise instance \citep{nesterov2011random,agarwal2010optimal,Duchi13}. This problem is markedly more simple, as one may form a good estimate of the gradient in any desired direction. Another simplifying assumption considered in the literature is an additional shape constraint such as smoothness of the objective (e.g. \cite{jamieson2012query}). In contrast, we only assume convexity and boundedness.


\section{Notation}
A noisy observation of the function value at a point $x\in K$ is denoted by $O_{1}\circ f (x)$, and the average of $m$ repeated queries is denoted by $O_m \circ f(x)$. We assume that the noise is sub-Gaussian with zero mean. We denote an optimal value $x^*\in\argmin_{x \in K} f(x)$. An affine transformation in $\mathbb{R}^{n+1}$ is denoted by $\mathcal{T} : \mathbb{R}^{n+1} \mapsto \mathbb{R}^{n+1}$. The region enclosed by the convex function $f$ and hyperplane $y\leq C_t$ is denoted by $K_t = \{(x,y)\in\mathbb{R}^{n+1}, C_t\geq y\geq f(x)\}$. It is easy to check that this is a convex set in $\mathbb{R}^{n+1}$. Let $\Delta \in \mathbb{R}^{n+1}$ denote the vertical vector linking $(x,f(x))\rightarrow (x,y)$ and $|\Delta|$ denotes its length, which equals to $y-f(x)$. We assume that an initial value $C_0$, an upper bound on the function over the set, is given. However, we do not assume that the function is Lipschitz.

%

Our goal is to bound the number of the noisy oracle calls given the target accuracy $\epsilon$. Assume that the convex set $K=K_0$ is contained in the axis-aligned cube of width $1$ centered at the origin. At the final epoch $T$, the remaining convex body $K_T$ will contain a cube of width $\epsilon$. 

 

\section{Random Walk on Convex Body}

In this section we will introduce the ball walk algorithm in the \emph{noiseless} oracle setting. Analysis of the Ball Walk algorithm was developed in \citet{kannan1997random}, and was later modified in \citet{bertsimas2004solving} to solve noiseless convex programs. Our algorithm for the noisy oracle setting builds on the theoretical properties of the noiseless Ball Walk.

The random walk algorithm consists of three main steps: ``Cut", ``Round" and ``Sample". We can assume that we start from $t=1$ and $C_0 = 1/2$, without loss of generality. Assume that $\tilde{\mathcal{O}}(n)$ near uniform distributed samples (a warm start that will be maintained throughout the procedure) are provided. This can be done in time that is independent of $\epsilon$. Let $n_t = \tilde{\mathcal{O}}(n)$ for all $t$. 
\begin{enumerate}
\item {\color{blue} Cut} the region $K_t$ at epoch $t$, enclosed by $y \geq f(\vec{x})$ and $y \leq C_t$, at the last coordinate of the average computed in the third step of the previous iteration. About $2/3$ of the random samples are still inside $K_t$, due to Lemma \ref{shrink.vol} below.
\item {\color{blue} Round} the convex body $K_t$ using an affine transformation $\mathcal{T}_t$ to a near isotropic position and denote the resulting convex body by $\mathcal{T}_t(K_t)$. The affine transformation $\mathcal{T}_t$ is calculated using half of the near-uniform samples left after the ``Cut'' procedure. ($1/3$ of the samples used in this step, and $1/3$ left untouched.)
\item {\color{blue} Sample} $n_{t} = \tilde{\mathcal{O}}(n)$ nearly independent uniform samples $X^t_1, X^t_2, ...., X^t_{n_t}$ (in the sense of Lemma \ref{mix.ball}) using ``Ball Walk" in the convex body $\mathcal{T}_t(K_t)$ based on the ``warm start" samples left after the ``Round'' procedure. Since there are about $\frac{1}{3} n_{t-1}$ of these seeds left, we run three independent chains of Ball Walk to ensure that we have $n_t$ samples after mixing. Set the new $C_{t} = \frac{1}{n_t} \sum_{i} (\mathcal{T}_t^{-1}\circ X^t_i )[n+1]$, here $X[n+1] \in \mathbb{R}$ denotes the last coordinate of the vector $X$. Go back to step 1.
\end{enumerate}

The following lemmas \ref{theta.lemma}-\ref{mix.ball} are useful in proving the theoretical guarantee of the Ball Walk algorithm.
The first lemma is taken from \citet[Corollary 5.2]{kannan1997random}.
\begin{lemma}[$\theta$-Near Isotropic]
\label{theta.lemma}
We call a convex body $K$ is in $\theta$-near isotropic position if for any vector $v$,
\begin{equation}
(1-\theta) \| v\|_2^2 \leq \mathbb{E} \langle x, v\rangle^2 \leq (1+\theta) \| v\|_2^2
\end{equation}  
where the expectation is taken over uniform distribution inside the convex body $K$.
Let $\theta<1/2$. If a convex body $K$ is in $\theta$-near isotropic position and $B$ is a unit ball, then
\begin{equation}
(1-2\theta)B \subseteq K \subseteq (1+2\theta) (n+1)B.
\end{equation} 
\end{lemma}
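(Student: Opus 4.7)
The plan is to reduce the $\theta$-near isotropic case to the exactly isotropic case ($\theta=0$) by an affine change of variables, then apply the classical Brunn--Grünbaum inclusion for isotropic convex bodies, and finally transfer the inclusion back. Let $X$ be uniform on $K$ and set $A := \mathbb{E}[XX^T]$. The hypothesis rewrites as $(1-\theta)I \preceq A \preceq (1+\theta)I$, since $v^T A v = \mathbb{E}\langle X,v\rangle^2$. (Implicit for the ball inclusions centered at origin to make sense is $\mathbb{E} X = 0$; I would note this and, if needed, translate $K$ so that the centroid equals the origin, which makes the second moment agree with the covariance.) Now define $K' := A^{-1/2}K$, so that $Y := A^{-1/2}X$ is uniform on $K'$ with $\mathbb{E}[YY^T] = A^{-1/2}AA^{-1/2} = I$. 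Thus $K'$ is in exactly isotropic position with centroid at the origin.

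For such an exactly isotropic $K'$ in $\mathbb{R}^{n+1}$, the classical convex-geometric result is $B \subseteq K' \subseteq (n+1)B$. The lower inclusion follows by Brunn's theorem: the one-dimensional marginal $\langle Y,v\rangle$ is log-concave with zero mean and unit variance, and centering forces $K'$ to reach distance at least $1$ from the origin along every direction. The upper inclusion is a Grünbaum-type estimate: for a centered convex body in $\mathbb{R}^{n+1}$ the support function in any direction $v$ is at most $(n+1)\sqrt{\mathbb{E}\langle Y,v\rangle^2}$, the simplex being the extremizer. I would invoke this as a black box; it is essentially the content distilled in the cited KLS Corollary~5.2. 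Pushing back through $K = A^{1/2}K'$ gives
\[
A^{1/2}B \;\subseteq\; K \;\subseteq\; (n+1)\,A^{1/2}B.
\]
The operator-norm bounds $\|A^{-1/2}\|_{\mathrm{op}} \leq 1/\sqrt{1-\theta}$ and $\|A^{1/2}\|_{\mathrm{op}} \leq \sqrt{1+\theta}$ imply $\sqrt{1-\theta}\,B \subseteq A^{1/2}B \subseteq \sqrt{1+\theta}\,B$, so
\[
\sqrt{1-\theta}\,B \;\subseteq\; K \;\subseteq\; (n+1)\sqrt{1+\theta}\,B.
\]
Finally the elementary estimates $\sqrt{1-\theta} \geq 1-2\theta$ and $\sqrt{1+\theta} \leq 1+2\theta$ (both valid for $\theta < 1/2$, verified by squaring) yield $(1-2\theta)B \subseteq K \subseteq (1+2\theta)(n+1)B$ as claimed.

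The hard part is the exactly isotropic inclusion itself, specifically the dimensional factor $(n+1)$ in the outer bound; this is where the genuine convex geometry lives and where the Brunn--Grünbaum machinery on one-dimensional marginals is doing work. Everything else is a linear-algebra transfer through $A^{\pm 1/2}$ together with the routine bookkeeping to convert $\sqrt{1\pm\theta}$ into the cleaner constants $1\pm 2\theta$. A minor care point I would flag at the start is that the definition only constrains the second-moment matrix, not the mean, so the proof (and the statement) tacitly presupposes that $K$ is centered at the origin.
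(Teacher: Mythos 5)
The paper itself gives no proof of this lemma---it is imported verbatim as Corollary~5.2 of Kannan--Lov\'asz--Simonovits (1997)---so there is no argument in the paper to compare against. Your reconstruction is a sensible way to organize the content, and the parts you actually carry out are correct: with $A = \mathbb{E}[XX^T]$, the definition is equivalent to $(1-\theta)I \preceq A \preceq (1+\theta)I$; the map $A^{-1/2}$ sends $K$ to an exactly isotropic body $K'$; the transfer $A^{1/2}B \subseteq K \subseteq (n+1)A^{1/2}B$ together with the eigenvalue bounds on $A^{\pm 1/2}$ gives $\sqrt{1-\theta}\,B \subseteq K \subseteq (n+1)\sqrt{1+\theta}\,B$; and the elementary inequalities $\sqrt{1-\theta}\ge 1-2\theta$ (valid for $0\le\theta\le 3/4$, hence for $\theta<1/2$, after noting both sides are nonnegative so squaring is legitimate) and $\sqrt{1+\theta}\le 1+2\theta$ ($\theta\ge 0$) convert this to the cleaner, slightly weaker constants in the lemma. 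Your remark that the definition tacitly presupposes the barycenter is at the origin is a genuine catch: without it, the inner inclusion $(1-2\theta)B \subseteq K$ with $B$ centered at $0$ cannot be true in general, and the paper's definition constrains only the second moment, not the mean.

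The caveat, which you yourself flag, is that the entire convex-geometric content sits in the black-boxed exactly isotropic inclusion $B \subseteq K' \subseteq (n+1)B$ (here $K'\subset\mathbb{R}^{n+1}$, so the constant is the ambient dimension). That is precisely what the cited KLS corollary establishes, so your argument should be read as ``the $\theta$-near isotropic version follows from the $\theta=0$ version by an affine change of variables plus linear algebra,'' which is accurate and mildly illuminating (it also shows the lemma could state the sharper $\sqrt{1\pm\theta}$ constants), but it does not independently re-derive the hard step. Given that the paper treats the whole lemma as a citation, invoking the $\theta=0$ case as a black box is a reasonable choice, and you are transparent about it.
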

The second lemma assures the constant factor of volume shrinkage in each epoch. It is proved in \citet[Lemma 7]{bertsimas2004solving}.
\begin{lemma}[Shrinkage of Volume for Cutting]
\label{shrink.vol}
The volume of the covex body $K_t$ drops by a factor of $\frac{2}{3}$ with high probability in each epoch.
\end{lemma}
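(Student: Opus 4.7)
The plan is to combine a classical result of Grünbaum on cuts through the centroid of a convex body with a concentration-of-measure argument that handles the fact that our cut is performed at an \emph{empirical} mean of near-uniform samples rather than at the true centroid.

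First, I would invoke Grünbaum's inequality: for any convex body $K' \subset \mathbb{R}^{n+1}$ and any hyperplane $H$ passing through the centroid of $K'$, each of the two half-bodies has volume at least $\left(\tfrac{n+1}{n+2}\right)^{n+1} \mathrm{Vol}(K') \geq e^{-1} \mathrm{Vol}(K')$. Consequently, the larger piece has volume at most $(1 - e^{-1}) \mathrm{Vol}(K') < \tfrac{2}{3}\,\mathrm{Vol}(K')$, since $e^{-1} > 1/3$. So cutting at the exact centroid $y$-coordinate would already give the desired shrinkage with a constant slack.

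Second, I would control the gap between the empirical cut height $C_t$ and the true centroid height $y^\star := \mathbb{E}_{X \sim \mathrm{Unif}(K_t)} X[n+1]$. After the Round step, the body $\mathcal{T}_t(K_t)$ is $\theta$-near isotropic (Lemma \ref{theta.lemma}), so by Lemma \ref{mix.ball} the Ball Walk produces $n_t = \tilde{\mathcal{O}}(n)$ samples whose joint law is close in total variation to the uniform product distribution. After applying $\mathcal{T}_t^{-1}$, the last coordinate of each sample is bounded (since $K_t \subset [-1/2, 1/2]^n \times [0, C_0]$), so a Hoeffding-type bound on the empirical mean of nearly independent, nearly uniform bounded random variables gives $|C_t - y^\star| \leq \gamma$ with probability $1 - \delta$, where $\gamma$ can be driven below any prescribed tolerance by picking the constants in $n_t = \tilde{\mathcal{O}}(n)$ large enough (the total variation slack from the mixing lemma can be absorbed into a small additive term in the failure probability).

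Third, I would compare the volume of $\{y \leq C_t\} \cap K_t$ with the Grünbaum half-volume $\{y \leq y^\star\} \cap K_t$. The difference is a horizontal slab of width $|C_t - y^\star| \leq \gamma$ whose cross-section has $n$-dimensional volume bounded by the maximal horizontal slice area of $K_t$, which in turn is bounded since $K_t$ sits inside the unit cube cross the initial height. Hence the extra volume is at most a constant multiple of $\gamma$, and choosing $\gamma$ sufficiently small (a constant not depending on $n$ suffices, given the $e^{-1} - 1/3$ slack from Grünbaum) keeps the surviving fraction below $2/3$ with high probability. The main obstacle is the second step: extracting sharp Hoeffding-type concentration from samples that are only approximately independent and approximately uniform, which forces one to carefully propagate the total variation error through the empirical mean and union bound it against the desired tail. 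Everything else is essentially a deterministic convex-geometric inequality plus a perturbation calculation.
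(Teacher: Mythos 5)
Your high-level plan — Gr\"unbaum's inequality for cuts through the centroid, plus a concentration argument to handle the fact that the cut is taken at an empirical mean — is indeed the same approach as the cited proof in \citet[Lemma 7]{bertsimas2004solving}, to which the paper simply defers. However, your second and third steps are carried out in the wrong coordinates, and as written they do not close.

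The problem is that in step 3 you bound the \emph{absolute} extra volume of the slab between $\{y = y^\star\}$ and $\{y = C_t\}$ by a constant multiple of $\gamma$, but what the lemma requires is a bound on the extra volume as a \emph{fraction} of ${\rm Vol}(K_t)$. Gr\"unbaum provides a slack of $(e^{-1}-\tfrac{1}{3})\,{\rm Vol}(K_t)$, and since ${\rm Vol}(K_t)\to 0$ as the algorithm proceeds (the body eventually shrinks to roughly an $\epsilon$-cube, so its vertical extent is $O(\epsilon)$), the absolute slack becomes arbitrarily small. Meanwhile your step 2, working after $\mathcal{T}_t^{-1}$ in the original coordinates, gives $|C_t - y^\star| = O(C_0/\sqrt{n_t})$ via Hoeffding — this is a tolerance that does \emph{not} shrink with $K_t$. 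Thus "choosing $\gamma$ sufficiently small, a constant not depending on $n$ suffices" is false at later epochs; you would need $\gamma$ to be a constant fraction of the vertical extent of $K_t$, which would force $n_t$ to grow without bound.

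The fix, and the content of Bertsimas--Vempala's Lemma 7, is to do both the concentration estimate and the perturbed Gr\"unbaum bound in the rounded coordinates $\mathcal{T}_t(K_t)$, where the body is $\theta$-near isotropic. There, each coordinate of a uniform sample has variance $1\pm\theta$, so the sample mean of $\tilde{\mathcal{O}}(n)$ near-uniform, near-independent points is within a small \emph{constant} (scale-free) Euclidean distance $t$ of the true centroid with high probability — no dependence on the shrinking size of $K_t$. Then the extension of Gr\"unbaum in isotropic position says that any halfspace whose boundary passes within distance $t$ of the centroid has volume at least $(1/e - O(t))\,{\rm Vol}$. Since $1/e > 1/3$, a small constant $t$ suffices, giving the $\tfrac{2}{3}$ factor uniformly over all epochs. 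Finally, the cut hyperplane $\{y = C_t\}$ in original coordinates maps to a (generally non--axis-aligned) hyperplane under $\mathcal{T}_t$; this is harmless because both Gr\"unbaum and its perturbed isotropic version are stated for arbitrary hyperplane directions. You should also note that the TV-closeness guaranteed by Lemma~\ref{mix.ball} enters the failure probability additively, as you indicated, but the variance bound used in the concentration must be the isotropic one, not the naive range bound $C_0$.
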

The third lemma is first introduced in \citet{kannan1997random} in a weaker version and later improved in \citet{rudelson1999random}. The version we are using can be found in \citet[Corollary 11]{bertsimas2004solving}.
\begin{lemma}[Near Isotropic Affine Transformation]
Let $K$ be a convex set, $$O(p n \log n \max\{p, \log n\})$$ random samples are sufficient to find an affine transformation $\mathcal{T}$ to bring $K$ into $1/4$-near-isotropic position, with probability at least $1-\frac{1}{2^{p-1}}$.
\end{lemma}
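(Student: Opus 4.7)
The plan is to construct $\mathcal{T}$ explicitly from the samples and then reduce the claim to a concentration-of-covariance-matrices statement. Let $x_1,\ldots,x_N$ be the samples drawn (approximately) uniformly from $K$, with $N = O(pn\log n\max\{p,\log n\})$. Form the empirical mean $\bar{x} = \frac{1}{N}\sum_{i} x_i$ and the empirical covariance $\hat{\Sigma} = \frac{1}{N}\sum_{i}(x_i - \bar{x})(x_i-\bar{x})^\top$, and set $\mathcal{T}(x) = \hat{\Sigma}^{-1/2}(x-\bar{x})$. By invariance of the isotropy property under affine maps, I may assume for the analysis that $K$ is already in isotropic position (applying the ``true'' isotropic transformation to the analysis, not the algorithm). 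Under this assumption the uniform distribution on $K$ has mean $0$ and covariance $I$, and I must only show that $\|\hat{\Sigma} - I\|_{\mathrm{op}} \le 1/8$ and $\|\bar{x}\| \le 1/8$ with probability at least $1-1/2^{p-1}$; the matrix inequality then gives eigenvalues of $\hat{\Sigma}$ in $[7/8,9/8]$, which is enough to certify that $\mathcal{T}(K)$ is $\tfrac{1}{4}$-near-isotropic after absorbing the mean shift.

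The key step is the spectral concentration bound. I would invoke Rudelson's theorem on sums of independent rank-one random matrices: for i.i.d.\ samples $X_i$ from an isotropic distribution supported on $\{\|X\|\le R\}$,
\begin{equation*}
\mathbb{E}\left\|\frac{1}{N}\sum_{i=1}^N X_i X_i^\top - I\right\|_{\mathrm{op}} \;\le\; C\, R\, \sqrt{\frac{\log N}{N}}.
\end{equation*}
For uniform samples from an isotropic convex body one has $\|X_i\| = O(\sqrt{n})$ either deterministically (via the containment $K\subseteq (n+1)B$ from Lemma~\ref{theta.lemma}) or up to logarithmic tails by Paouris's inequality. Plugging $R = O(\sqrt{n})$ yields an expected error $O(\sqrt{n\log N/N})$, so $N = \Omega(n\log n)$ already suffices for a constant-order expected error. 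The mean shift is controlled analogously by a vector Bernstein / Chebyshev argument and is a lower-order constraint.

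To boost from expected to high probability of type $1 - 2^{-(p-1)}$, I would use a median-of-means style amplification: partition the $N$ samples into $\Theta(p)$ independent blocks of size $n\log n\max\{p,\log n\}$, apply Markov (or the exponential tail version of Rudelson due to Adamczak--Litvak--Pajor--Tomczak-Jaegermann) to each block to obtain constant success probability $\ge 3/4$, and take the block whose empirical $\hat{\Sigma}$ lies within the required tolerance of the majority of other blocks' $\hat{\Sigma}$'s. An independent-trials argument then gives failure probability at most $2^{-\Omega(p)}$, accounting for the $p$ factor outside and the extra $\max\{p,\log n\}$ factor needed to make each block's tail small enough relative to $n$. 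Centering (the $\bar{x}$ part of $\mathcal{T}$) is handled identically using concentration of the empirical mean.

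The hard part is the Rudelson inequality itself, which I would cite rather than reprove: its proof uses the non-commutative Khintchine inequality together with a symmetrization and chaining argument, and the extension to the high-probability regime with the precise $\max\{p,\log n\}$ dependence is the content of the Adamczak--Litvak--Pajor--Tomczak-Jaegermann refinement. Given those tools, the remaining work is bookkeeping: checking that the ``nearly uniform'' samples produced by Ball Walk (rather than exactly uniform) do not spoil the concentration by more than a constant factor, which follows since the total variation distance to uniform can be made polynomially small at a cost absorbed into the $\tilde{O}$.
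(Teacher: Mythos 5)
The paper does not actually prove this lemma: it is stated as a citation to \citet[Corollary 11]{bertsimas2004solving}, which in turn rests on the Kannan--Lov\'asz--Simonovits analysis as sharpened by \citet{rudelson1999random}. Your proposal follows essentially the same route as that citation chain: estimate the empirical mean and covariance, whiten with $\hat{\Sigma}^{-1/2}(\cdot-\bar x)$, and control $\|\hat\Sigma-I\|_{\mathrm{op}}$ via Rudelson's operator-norm concentration for sums of rank-one matrices from an isotropic log-concave distribution. Identifying Rudelson's theorem as the engine is exactly right, and invoking ALPT for the high-probability version (rather than expectation plus amplification) is a legitimate modern alternative to what Bertsimas--Vempala did.

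Two places, though, need repair. First, the claim that $\|X_i\|=O(\sqrt n)$ holds \emph{deterministically} from $K\subseteq(n+1)B$ is wrong: that containment only gives $\|X_i\|\le n+1$, which if plugged into $R$ in Rudelson's bound would force $N=\Omega(n^2\log n)$ rather than $\tilde O(n)$. The $O(\sqrt n)$ scale is a high-probability statement (Paouris, or Alesker/Borell concentration of $\|X\|^2$ for isotropic log-concave measures), and the argument has to condition on a bounded event and absorb the rare large-norm samples; your parenthetical nod to Paouris is the correct fallback, but the deterministic claim should be dropped. Second, the accounting for the $\max\{p,\log n\}$ factor in the block size is hand-waved: you assert it is ``needed to make each block's tail small enough relative to $n$'' without saying why the per-block sample count, and not merely the number of blocks, must grow with $p$ once $p>\log n$. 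In the cited source this factor comes from the deviation term in Rudelson's high-probability form (the $\|X\|$ tail contributes a $\max\{p,\log n\}$ truncation level), and your sketch would be stronger if it traced that dependence explicitly rather than attributing it loosely to the amplification scheme.
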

\begin{remark}
When we take $p= O(\log n)$, we can conclude that with overwhelming probability (with probability at least $1-\frac{1}{n^\alpha}$, where $\alpha$ is arbitrary in the sense that it only affects the constant in the big $O$ notation), the transformation $\mathcal{T}$ bring $K$ into near isotropic position with $O(n \log^3 n)$ random samples.
\end{remark}
The last lemma about the mixing time of Ball Walk is proved in \citet[Theorem 2.2]{kannan1997random}.
\begin{lemma}[Mixing Time for Ball Walk]
\label{mix.ball}
Given a convex body $K$ satisfying $B \subseteq K \subseteq d B$, a positive integer $N$ and $\epsilon>0$, we can generate a set of $N$ random points $\{v_1, \ldots, v_N\}$ in $K$ that are
\begin{itemize}
\item[(a)] almost uniform in the sense that the distribution of each one is at most $\epsilon$ away from the uniform in total variation distance, and
\item[(b)] almost (pairwise) independent in the sense that for every $1\leq i < j \leq N$ and every two measurable subsets $A$ and $B$ of $K$,
\begin{equation}
|P(v_i \in A, v_j \in B) - P(v_i \in A) P(v_j \in B)| \leq \epsilon.
\end{equation}
\end{itemize}
The algorithm uses only $\tilde{\mathcal{O}}(n^3d^2+N n^2 d^2)$ calls to the oracle.
\end{lemma}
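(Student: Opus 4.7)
The plan is to analyze the Ball Walk as a reversible Markov chain with uniform stationary distribution on $K$, bound its conductance, and translate this into a mixing time bound that controls the oracle-call cost. I would first set up the chain: from $x \in K$, pick a uniform point $y$ in the ball of radius $r = \Theta(1/\sqrt{n})$ around $x$; move to $y$ if $y \in K$, else stay. One oracle call per step suffices to decide membership, so the total number of oracle calls equals the number of chain steps, and the problem reduces to bounding the mixing time and the spacing required between successive output samples.

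The main technical step is to establish a conductance bound of the form $\phi \geq \tilde{\Omega}(1/(nd))$ via the Lov\'asz--Simonovits isoperimetric framework. For any measurable $S \subseteq K$ with $\mathrm{vol}(S) \leq \mathrm{vol}(K)/2$, I would partition $K$ into three pieces: points whose one-step distribution places most mass in $S$, points whose one-step distribution places most mass in $K \setminus S$, and ``boundary'' points. A geometric computation using the step radius $r$ shows that the first two pieces are at Euclidean distance at least $\Omega(r/\sqrt{n})$ apart. The convex-body isoperimetric inequality---in which the diameter bound $2d$ enters as the relevant length scale---then forces the measure of the boundary piece to dominate a $\Omega(1/(nd))$ fraction of $\min(\mathrm{vol}(S), \mathrm{vol}(K \setminus S))$, yielding the conductance estimate. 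The delicate point, and the main obstacle in my view, is the treatment of points close to $\partial K$, where the rejection probability approaches $1$; this requires a careful smoothing or truncation argument so that the effective conductance does not collapse near the boundary.

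Given the conductance bound, the standard conductance-to-mixing inequality yields $\tau = \tilde{O}(\phi^{-2}) = \tilde{O}(n^2 d^2)$ steps to reach total variation distance $\epsilon$ from uniform, provided one begins from a warm start. Such a warm start can be produced by a dart-throwing or rounding bootstrap seeded at the guaranteed inscribed ball $B$, at a one-time cost of $\tilde{O}(n^3 d^2)$ oracle calls. To obtain $N$ almost pairwise independent samples I would run the chain for an additional $\tau$ steps between successive outputs: by the mixing bound, the conditional distribution of $v_j$ given $v_i$ is within $\epsilon$ in total variation of stationary, which is itself within $\epsilon$ of the marginal of $v_j$, so a coupling argument gives $|P(v_i \in A, v_j \in B) - P(v_i \in A)P(v_j \in B)| \leq O(\epsilon)$. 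Summing the warm-start and per-sample costs yields the advertised $\tilde{O}(n^3 d^2 + N n^2 d^2)$ bound.
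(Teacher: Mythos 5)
The paper does not supply a proof for this lemma; it cites it directly from \citet[Theorem 2.2]{kannan1997random}. Your sketch correctly recapitulates the classical conductance-based argument underlying that cited result --- Ball Walk with step radius $\Theta(1/\sqrt{n})$, a conductance lower bound of order $\tilde{\Omega}(1/(nd))$ via Lov\'asz--Simonovits isoperimetry (with the boundary-layer difficulty you flag being exactly where the original proof introduces the speedy/lazy modification), mixing time $\tilde{\mathcal{O}}(n^2 d^2)$ from a warm start, a bootstrap phase costing $\tilde{\mathcal{O}}(n^3 d^2)$, and an extra mixing interval between outputs to obtain approximate pairwise independence --- so you are following essentially the same route as the source the paper defers to.
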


Given the above lemmas, we can give a precise upper bound on the number of oracle calls need in the noiseless setting. The proof is given in \citet{bertsimas2004solving}. We sketch the main idea here for completeness.

\begin{theorem}[Bound on Oracle in Noiseless Case]
\label{noiseless.thm}
Each iteration of the random walk algorithm in the noiseless case uses at most $\tilde{\mathcal{O}}(n^4)$ number of oracle calls. Further, the algorithm can be implemented in at most $\tilde{\mathcal{O}}(n^5)$, with high probability.
\end{theorem}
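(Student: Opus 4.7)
My plan is to combine Lemmas \ref{theta.lemma}--\ref{mix.ball} into a bound on the oracle cost of a single Cut--Round--Sample epoch, and then multiply by the total number of epochs, which Lemma \ref{shrink.vol} lets me bound geometrically in the volume ratio.

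\emph{Cost of one epoch.} The Cut step reuses the samples drawn in the Sample step of the previous epoch, so it incurs no new oracle calls; in the noiseless setting membership in the sliced body is decided by one exact function evaluation, which is already implicit in the points we carry. The Round step needs $\tilde{\mathcal{O}}(n\log^3 n)$ near-uniform samples to produce an affine transformation $\mathcal{T}_t$ bringing $K_t$ into $1/4$-near-isotropic position, by the remark following the isotropic-rounding lemma with $p=\Theta(\log n)$; these samples are taken from the $\tilde{\mathcal{O}}(n)$ points generated in the previous Sample phase, so the Round step also costs no additional oracle calls. Applying Lemma \ref{theta.lemma} with $\theta=1/4$ gives $\frac{1}{2}B\subseteq \mathcal{T}_t(K_t)\subseteq \frac{3}{2}(n+1)B$, so the aspect ratio $d$ appearing in Lemma \ref{mix.ball} is $O(n)$. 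Finally, generating $n_t=\tilde{\mathcal{O}}(n)$ almost-independent, almost-uniform points inside $\mathcal{T}_t(K_t)$ by Ball Walk costs $\tilde{\mathcal{O}}(n^3 d^2 + n_t\, n^2 d^2)$ oracle calls by Lemma \ref{mix.ball}, which after the careful warm-start bookkeeping of \citet{bertsimas2004solving} reduces to $\tilde{\mathcal{O}}(n^4)$ per epoch.

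\emph{Number of epochs.} By Lemma \ref{shrink.vol}, the volume of $K_t$ contracts by a factor of $2/3$ per epoch with high probability. Since $K_0$ is contained in the unit cube and the procedure terminates once $K_T$ contains an $\epsilon$-cube, the volume ratio we must traverse is at most $\epsilon^{n+1}$, yielding $T=O(n\log(1/\epsilon))=\tilde{\mathcal{O}}(n)$ epochs. Multiplying the per-epoch oracle cost $\tilde{\mathcal{O}}(n^4)$ by $T$ gives the global bound $\tilde{\mathcal{O}}(n^5)$ on oracle calls, which also dominates the wall-clock running time since each Ball Walk step is only polynomial-time arithmetic beyond the one oracle lookup.

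\emph{Main obstacle.} The delicate part is not the bookkeeping of $d$ and $n_t$, but rather the inductive ``warm start'' invariant: one must show that after Cut and Round, a constant fraction of the samples from the previous epoch survive and that their distribution is still close enough to uniform on $\mathcal{T}_t(K_t)$ that Lemma \ref{mix.ball} applies with a polylogarithmic number of Ball Walk steps per generated sample. This is exactly the coupling-plus-rejection argument worked out in \citet{bertsimas2004solving}; I would cite it rather than redo it here, and all high-probability statements (shrinkage, rounding, mixing) are combined via a union bound over the $\tilde{\mathcal{O}}(n)$ epochs.
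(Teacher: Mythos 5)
Your proof takes essentially the same route as the paper: cite \citet{bertsimas2004solving} (Theorem 12) for the $\tilde{\mathcal{O}}(n^4)$ per-epoch oracle cost, and use Lemma \ref{shrink.vol} to bound the number of epochs by $T = O(n\log(1/\epsilon))$ via the volume ratio. The extra detail you add for the per-epoch cost is where you should be careful: plugging $d=O(n)$ and $n_t=\tilde{\mathcal{O}}(n)$ directly into Lemma \ref{mix.ball}'s $\tilde{\mathcal{O}}(n^3 d^2 + n_t n^2 d^2)$ yields $\tilde{\mathcal{O}}(n^5)$ per epoch, not $\tilde{\mathcal{O}}(n^4)$, and the clause ``which after the careful warm-start bookkeeping of \citet{bertsimas2004solving} reduces to $\tilde{\mathcal{O}}(n^4)$'' is carrying the entire factor-of-$n$ saving. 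That saving is not bookkeeping on top of Lemma \ref{mix.ball}; it is a genuinely stronger mixing bound ($O^*(n^3)$ per sample from a warm start in a near-isotropic body) that Bertsimas and Vempala prove separately and that Lemma \ref{mix.ball} as stated does not give. The paper avoids this trap by going straight to their Theorem 12 rather than routing through Lemma \ref{mix.ball}, which is the cleaner thing to do here; otherwise your argument matches, and the epoch count, union bound over epochs, and $\epsilon^{n+1}$ volume estimate are all correct.
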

\begin{proof}
According to \citet[Theorem 12]{bertsimas2004solving}, the number of oracle calls in each iteration is upper bounded by $\tilde{\mathcal{O}}(n^4)$. The volume ratio between the initial convex body and final convex body is
\begin{equation}
\frac{{\rm Vol} (K_T)}{{\rm Vol} (K_0)} > \epsilon^n.
\end{equation}
According to Lemma \ref{shrink.vol},
\begin{equation}
\left( \frac{2}{3} \right)^T \geq \frac{{\rm Vol} (K_T)}{{\rm Vol} (K_0)} > \epsilon^n
\end{equation}
Thus $T = O(n \log \frac{1}{\epsilon})$. So the total number of oracle calls is bounded by $\tilde{\mathcal{O}}(n^5)$.
\end{proof}

  
 \section{Adaptive Query Algorithm}
 
Based on the Ball Walk analysis in the noiseless setting, the remaining difficulty lies in bounding the misclassification error, i.e. the error of incorrectly classifying the query point as inside or outside the convex body. We use an adaptive query algorithm to address this problem near optimally. The intuitive statistical idea is: keep doubling the number of samples until we get enough ``confidence" to tell whether a point is inside or not. The adaptive query algorithm is illustrated in the following with full details.

Suppose we would like to decide whether a point $(x,y)\in \mathbb{R}^{n+1}$ is inside the current epigraph. The $m$-sample noisy oracle returns $O_m\circ f(x)$. Let $Z$ denote a standard normal $\mathcal{N}(0,1)$ (or sub-Gaussian tail random variable; the proof is almost identical), and $C$ a level to be determined later. We make the following adaptive decision:
\begin{itemize}
\item $(x,y)$ is {\color{blue} OUTSIDE} - {\color{blue} $Out$} if $y \leq O_m\circ f(x) - \frac{C}{\sqrt{m}}$. The probability of making this decision is 
$$P({\rm Outside}) = P\left(Z \geq \sqrt{m}(y-f(x))+C\right).$$
\item $(x,y)$ is {\color{blue} INSIDE} - {\color{blue} $In$}  if $y \geq O_m\circ f(x) + \frac{C}{\sqrt{m}}$. The probability of making this decision is 
$$P({\rm Inside}) = P\left(Z \leq \sqrt{m}(y-f(x))-C\right).$$
\item {\color{blue}POSTPONE} decision - {\color{blue} $Post$} decision if $y \in \left[O_m\circ f(x) - \frac{C}{\sqrt{m}},O_m\circ f(x) + \frac{C}{\sqrt{m}}\right]$. The probability of making this decision is $$P({\rm Postpone}) = P\left(\sqrt{m}(y-f(x))-C \leq Z \leq \sqrt{m}(y-f(x))+C\right).$$
\end{itemize}
The Adaptive Query algorithm is:
\begin{itemize}
\item[1.] Set a dictionary of sample size $m$ being the set $S = \left\{2^0, 2^1, \ldots, 2^{k},\ldots \right\}$.
\item[2.] Take $m$ from the dictionary and construct the test sequentially. Stop when we made a decision either {\color{blue}OUTSIDE} or {\color{blue}INSIDE}. Otherwise continuously increase $m$ from the dictionary.
\end{itemize}

\begin{lemma}
\label{adapt.query}
With probability at least $1 - 2\cdot \left( \log_2 \frac{4C^2}{|\Delta|^2}+1\right) \cdot \exp\left(-\frac{C^2}{2}\right)
  $, where $|\Delta| = y - f(x)$, $m = \frac{4C^2}{(y - f(x))^2}$ query numbers are enough to ensure correct decision.
\end{lemma}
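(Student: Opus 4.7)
The plan is a straightforward union bound over the doubling schedule $m \in \{2^0, 2^1, \ldots\}$. Without loss of generality assume $y > f(x)$, so $|\Delta| = y - f(x) > 0$ and the correct decision is \emph{Inside}; the case $y < f(x)$ is handled symmetrically by swapping labels. Set $L = \lceil \log_2(4C^2/|\Delta|^2)\rceil + 1$, so after at most $L$ doublings the sample size reaches $m^\star := 4C^2/|\Delta|^2$, at which point $\sqrt{m^\star}|\Delta| = 2C$.

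First I would bound, at a single test of size $m$, the probability of an incorrect decision. Under our sign convention the only wrong outcome is declaring \emph{Outside}, whose probability is $P(Z \geq \sqrt{m}|\Delta| + C) \leq \exp(-C^2/2)$ by the sub-Gaussian tail, using only $\sqrt{m}|\Delta| \geq 0$. This bound holds uniformly in $m$, for every $m$ in the dictionary $S$.

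Next I would enumerate the failure modes of the adaptive procedure prior to reaching $m^\star$: either (i) at some doubling step $k$ with $m_k \leq m^\star$ the algorithm declares \emph{Outside}, or (ii) at every step $k$ with $m_k \leq m^\star$ the algorithm \emph{Postpones}. There are at most $L$ such steps, so a union bound controls (i) by $L \exp(-C^2/2)$. For (ii), it suffices to note that the final test at $m = m^\star$ fails to declare \emph{Inside} exactly when $Z \geq \sqrt{m^\star}|\Delta| - C = C$, which again has probability at most $\exp(-C^2/2)$. Summing, the probability that the algorithm has not correctly decided by the time $m$ reaches $m^\star$ is at most $(L+1)\exp(-C^2/2) \leq 2L \exp(-C^2/2)$, yielding the claimed bound.

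The main (minor) obstacle is bookkeeping: verifying that the list of failures above exhausts all ways in which the algorithm can miss the correct answer within the budget $m^\star$, and carefully using that the doubling grid overshoots the ideal $m^\star$ by at most a factor of two (absorbed in the stated constant $2$). The sub-Gaussian case is identical, with $\exp(-t^2/2)$ replaced by the sub-Gaussian tail; all quantitative bounds above are preserved up to constants in $C$.
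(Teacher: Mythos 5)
Your proof is correct and follows essentially the same route as the paper: reduce to the Inside case by symmetry, bound the failure probability by a union bound over the doubling schedule where each premature \emph{Outside} call costs $\exp(-C^2/2)$, and observe that at $m^\star = 4C^2/|\Delta|^2$ the threshold $\sqrt{m^\star}|\Delta|-C = C$ gives the same tail bound for failing to declare \emph{Inside}. The paper writes the decomposition as a sum over disjoint stopping events and then drops the \emph{Postpone} factors, which is the same union bound you use; your explicit ceiling $L=\lceil\log_2(4C^2/|\Delta|^2)\rceil+1$ handles the grid-alignment of $m^\star$ slightly more carefully than the paper does, but the content is identical.
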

\begin{proof}
First suppose the query point $(x,y)$ is inside the convex body, then $|\Delta| = y - f(x) >0$. Define the event $E = \left\{ \mbox{Classified as Inside using}~m \leq \frac{4C^2}{(y-f(x))^2}~\mbox{query samples} \right\}$  
\begin{align*}
P(E^c) & =  P(\text{Out}_{m=2^0}) + P(\text{Post}_{m\in S,m<2^1}, \text{Out}_{m=2^1})  \\
&~~~~~~~~~~ + \ldots+ P(\text{Post}_{m\in S, m<4C^2/|\Delta|^2}, \{\text{In}_{m = 4C^2/|\Delta|^2} \}^c) \\
&\leq P(\text{Out}_{m=2^0}) + P(\text{Out}_{m=2^1})+\ldots+P(\{\text{In}_{m = 4C^2/|\Delta|^2} \}^c)\\
& \leq  \sum_{i < \log_2 (4C^2/|\Delta|^2)} P\left(Z \geq \sqrt{m}(y-f(x))+C, m=2^i\right) \\
&~~~~~~~~~~ + P(Z \geq \sqrt{m}(y-f(x))-C, m=4C^2/|\Delta|^2)\\
& \leq \log_2 \frac{4C^2}{|\Delta|^2} \cdot \exp\left(-\frac{C^2}{2}\right) + \exp\left(-\frac{C^2}{2}\right) = \left( \log_2 \frac{4C^2}{|\Delta|^2} +1\right) \cdot \exp\left(-\frac{C^2}{2}\right).
\end{align*}
Similarly, suppose the query point $(x,y)$ is outside the convex body, then $|\Delta| = y - f(x) <0$. Define a set $E = \left\{ \mbox{Classified as Outside using}~m \leq \frac{4C^2}{(y-f(x))^2}~\mbox{query samples}\right\}$  
\begin{align*}
P(E^c) & =  P(\text{In}_{m=2^0}) + P(\text{Post}_{m\in S,m<2^1}, \text{In}_{m=2^1}) \\
&~~~~~~~~~~ +\ldots+ P(\text{Post}_{m\in S, m<4C^2/|\Delta|^2}, \{\text{Out}_{m = 4C^2/|\Delta|^2} \}^c) \\
&\leq  P(\text{In}_{m=2^0}) + P(\text{In}_{m=2^1})+\ldots+P(\{\text{Out}_{m = 4C^2/|\Delta|^2} \}^c)\\
& \leq  \sum_{i < \log_2 (4C^2/|\Delta|^2)} P\left(Z \leq \sqrt{m}(y-f(x))-C, m=2^i\right) \\
&~~~~~~~~~~ + P(Z \leq \sqrt{m}(y-f(x))+C, m=4C^2/|\Delta|^2)\\
& \leq  \log_2 \frac{4C^2}{|\Delta|^2} \cdot \exp\left(-\frac{C^2}{2}\right) + \exp\left(-\frac{C^2}{2}\right)=  \left( \log_2  \frac{4C^2}{|\Delta|^2} +1\right) \cdot \exp\left(-\frac{C^2}{2}\right).
\end{align*}
\end{proof}

\begin{remark}
As we can see, if $|\Delta|>\frac{1}{n^k}$ (polynomial decay in terms of $n$), as long as $C = O(\sqrt{\log n})$ with a constant big enough (say $C = \sqrt{2 (\ell+1) \log n}$), the error probability is $o(\frac{1}{n^\ell})$. This probability can be arbitrary small with polynomial decay in terms of of $n$.
\end{remark}


\section{Stochastic Convex Optimization}
The algorithm to solve stochastic convex optimization problem given noisy oracle in our paper is a combination of random walk in convex body and adaptive hypothesis testing, as illustrated in the following.

\begin{itemize}
\item[1.] Perform the random walk algorithm as in noiseless case.
\item[2.] When establishing whether a point is inside or outside, use the adaptive query algorithm.
\end{itemize}

In order to analyze the expected number of oracle calls used in this algorithm, we will first introduce some lemmas revealing the geometry of convex body and property of the level set function of the given convex function.
%
\begin{figure}[h]
	\centering
		\includegraphics[width=1.6in]{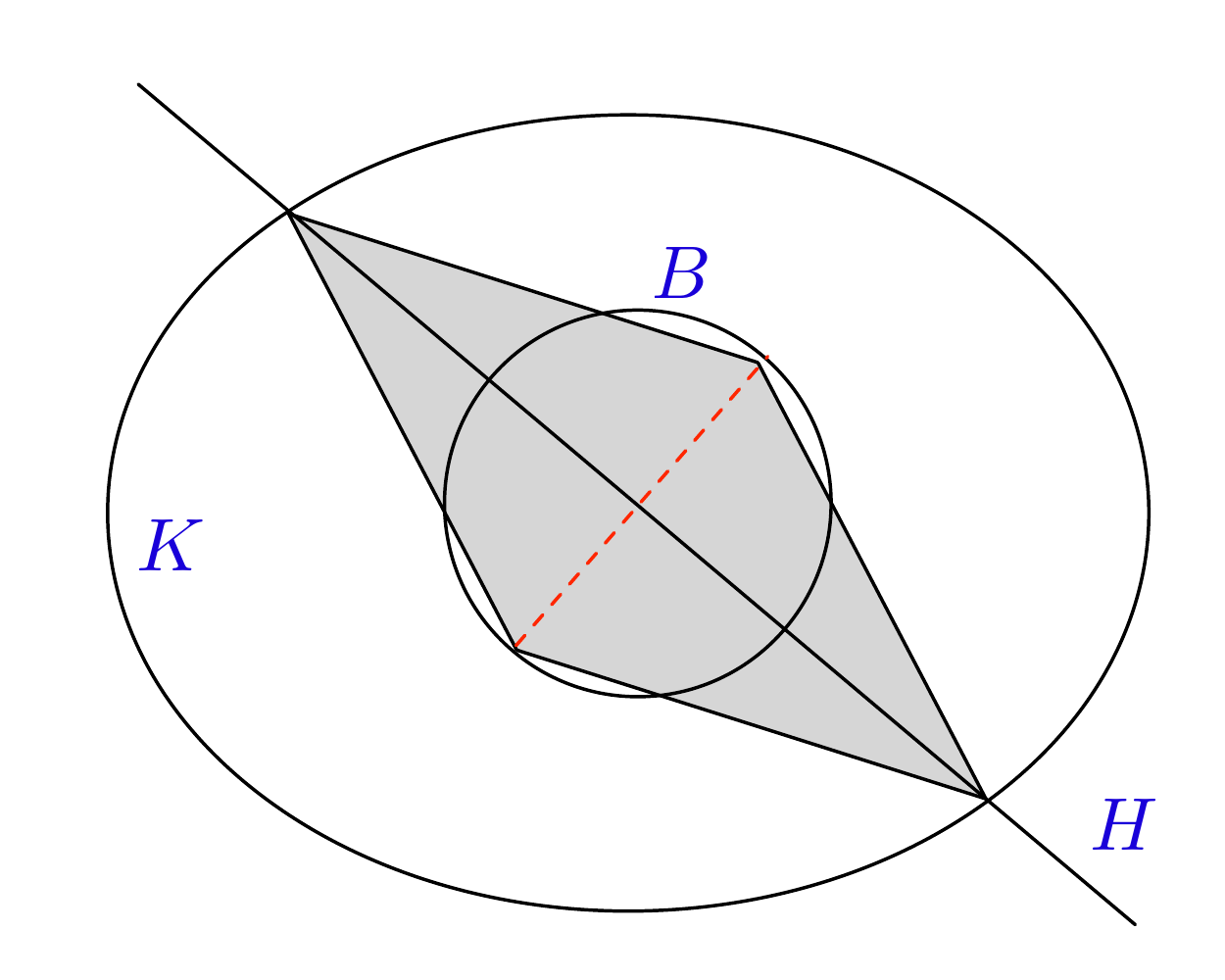}
	\caption{Graphical proof of Lemma \ref{geo.lm}. $K$ denotes the convex body, $B$ the inscribed ball, and $H$ the hyperplane. The shaded area is the ``Double Cone'' inside the convex body $K$.}
	\label{fig:graphics_cone}
\end{figure}

%

\begin{lemma}[Geometry of Convex Body]
\label{geo.lm}
Let $\theta<1/2$. For a convex body $K\in \mathbb{R}^n$ in $\theta$-near isotropic position,  and a arbitrary hyperplane $H \in \mathbb{R}^{n-1}$, the following inequality holds
\begin{equation}
{\rm Vol}(K \cap H) \leq \frac{n}{2-4\theta} \cdot {\rm Vol}(K)
\end{equation}
\end{lemma}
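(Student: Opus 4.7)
The plan is to make rigorous the \emph{double cone} construction depicted in Figure~\ref{fig:graphics_cone}. The inequality is trivial when $K \cap H = \emptyset$, so assume $K \cap H$ is nonempty.

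First I would set up the double cone: pick $p^+ \in K$ maximizing the distance to $H$ among points of $K$ on one side of $H$, and $p^- \in K$ maximizing it on the other side (allowing $p^-$ to lie on $H$, so the ``cone'' degenerates, if $H$ is a supporting hyperplane of $K$). Let $h_\pm \geq 0$ denote these distances and $w = h_+ + h_-$ the total width of $K$ in the direction normal to $H$. Convexity of $K$ implies that the two cones with common base $K \cap H$ and apices $p^+$ and $p^-$ are both contained in $K$, and they meet only in $K \cap H$. Computing cone volumes by the standard $(\text{base})\times(\text{height})/n$ formula,
\begin{equation*}
{\rm Vol}(K) \;\geq\; \frac{h_+ + h_-}{n}\,{\rm Vol}_{n-1}(K \cap H) \;=\; \frac{w}{n}\,{\rm Vol}_{n-1}(K \cap H).
\end{equation*}

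Second, I would use the isotropy hypothesis via Lemma~\ref{theta.lemma} to conclude $(1-2\theta)B \subseteq K$, so $K$ contains an inscribed ball of radius $r := 1-2\theta$. I then want $w \geq 2r$ in every configuration. Let $c$ be the center of the inscribed ball and $d$ be the distance from $c$ to $H$. If $d < r$, then both caps of the ball lie on opposite sides of $H$ and contribute heights $r-d$ and $r+d$, summing to $2r$, so $w \geq 2r$. If $d \geq r$, the whole ball lies on one side of $H$, contributing at least $d+r \geq 2r$ to $w$ on that side; the other side contributes $\geq 0$, which is enough (this is the edge case where $H$ is close to being a supporting hyperplane, and it is the only place one has to be careful that nonemptiness of $K \cap H$ is being used implicitly). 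Thus $w \geq 2(1-2\theta)$ unconditionally.

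Combining the two steps gives ${\rm Vol}(K) \geq \frac{2-4\theta}{n}\,{\rm Vol}_{n-1}(K \cap H)$, which rearranges to the claimed bound. The only real obstacle is the case analysis in step two for a hyperplane $H$ that misses the inscribed ball; once one observes that the inscribed ball itself supplies a cone of height $\geq 2r$ on the side it lives on, the argument is immediate.
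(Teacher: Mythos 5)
Your proof is correct and follows the same double-cone-over-$K\cap H$ construction as the paper, with the inscribed ball of radius $1-2\theta$ from Lemma~\ref{theta.lemma} supplying the lower bound $w\geq 2(1-2\theta)$ on the total cone height. Your explicit case analysis on the distance $d$ from the ball's center to $H$ carefully covers the degenerate single-cone case that the paper only alludes to with its parenthetical ``(possibly two),'' but the underlying argument is identical.
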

\begin{proof}
As shown in \citet{kannan1997random}, we can always find a unit ball $B$ inside the convex body $K$ such that
\begin{equation}
\label{iso.bound}
(1-2\theta)B \subseteq K \subseteq (1+2\theta) (n+1)B.
\end{equation}
Consider the intersection $K\cap H$. We know that this intersection is a convex body in $\mathbb{R}^{n-1}$. Find the farthest points $P$ (possibly two) on each side of $K \cap H$ on the surface of the unit ball $B$ to $K\cap H$. Connect $P$ with the boundary of $K\cap H$, we get a ``Cone" or ``Double Cone" $C$ with volume
\begin{equation}
\frac{2-4\theta}{n} \cdot {\rm Vol}(K \cap H) \leq \frac{(1-2\theta) \cdot d(B)}{n} \cdot {\rm Vol}(K \cap H) \leq {\rm Vol}(C) \leq {\rm Vol}(K).
\end{equation}
Thus proof completed. The graphical illustration of this lemma is in Figure~\ref{fig:graphics_cone}.
\end{proof}

\begin{lemma}[Distribution of the Convex Level-Set Function]
\label{dist.levelset}
Fix a convex function $f$ and the associated enclosed convex body $K$. For any point $(x,y) \in K$, $\Delta \in \mathbb{R}^{n+1}$ denotes the vertical vector linking $(x,f(x))\rightarrow (x,y)$. Denote the affine transformation that brings the convex body $K \subset \mathbb{R}^{n+1}$ to $\theta$-near isotropic position ($\theta<1/2$) as $\mathcal{T}$. Consider the uniform distribution on $\mathcal{T}(K)$. Then the distribution of the scalar $|\mathcal{T}(\Delta)|$ has the following properties:
\begin{itemize}
\item $|\mathcal{T}(\Delta)| = c \cdot |\Delta|$, with the constant factor depending on $\mathcal{T}$. 
\item The probability measure $\mathcal{P}(\cdot)$ of $|\mathcal{T}(\Delta)|$ satisfies $d \mathcal{P}(s) \leq \frac{n+1}{2-4\theta} \cdot ds$
\end{itemize}
\end{lemma}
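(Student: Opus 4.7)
For part (1), write the affine transformation as $\mathcal{T}(v) = Av + b$ and note that $\Delta = |\Delta| \cdot e_{n+1}$ is aligned with the last coordinate axis for every point $(x,y)$. Hence $\mathcal{T}(\Delta) = A\Delta = |\Delta| \cdot A e_{n+1}$, so $|\mathcal{T}(\Delta)| = c \cdot |\Delta|$ with $c := \|A e_{n+1}\|$ depending only on $\mathcal{T}$.

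For part (2), I would first reduce to the original body. Since $\mathcal{T}$ has constant Jacobian, the uniform distribution on $\mathcal{T}(K)$ is the pushforward of the uniform distribution on $K$ under $\mathcal{T}$; hence the density of $|\mathcal{T}(\Delta)| = c(y-f(x))$ equals $c^{-1}$ times the density of $u := y - f(x)$ for $(x,y)$ uniform on $K$. Fubini gives
\[
p(u) = \frac{{\rm Vol}_n(\{x : f(x) \leq C - u\})}{{\rm Vol}(K)},
\]
which is nonincreasing in $u$. It therefore suffices to bound the density of $|\mathcal{T}(\Delta)|$ at $s = 0$, i.e., the quantity ${\rm Vol}_n(\pi(K))/(c \cdot {\rm Vol}(K))$, where $\pi(K) := \{x : f(x) \leq C\}$ is the domain of $f$.

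The next step is to identify $\pi(K)$ with a genuine hyperplane cross-section of $\mathcal{T}(K)$ so that Lemma~\ref{geo.lm} applies. The top face $\pi(K) \times \{C\}$ sits in the horizontal hyperplane $\{y = C\}$, and its $\mathcal{T}$-image is exactly $\mathcal{T}(K) \cap H_0$, where $H_0 := \mathcal{T}(\{y = C\})$ is a genuine hyperplane in $\mathbb{R}^{n+1}$. A slice-independent Jacobian $J$ relates the two $n$-dimensional volumes via ${\rm Vol}_n(\mathcal{T}(K) \cap H_0) = J \cdot {\rm Vol}_n(\pi(K))$. To pin down $J$, I would apply coarea/Fubini to the parallel family $H_u := \mathcal{T}(\{y = C-u\})$, whose consecutive Euclidean separation equals $1/\|A^{-T} e_{n+1}\|$; matching two expressions for ${\rm Vol}_{n+1}(\mathcal{T}(K))$ forces $J = |\det A| \cdot \|A^{-T} e_{n+1}\|$. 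Applying Lemma~\ref{geo.lm} to $\mathcal{T}(K)$ with hyperplane $H_0$ and canceling $|\det A|$ yields
\[
\frac{{\rm Vol}_n(\pi(K))}{{\rm Vol}(K)} \;\leq\; \frac{n+1}{2-4\theta} \cdot \frac{1}{\|A^{-T} e_{n+1}\|}.
\]
Dividing by $c = \|A e_{n+1}\|$ and invoking Cauchy--Schwarz in the form
\[
\|A e_{n+1}\| \cdot \|A^{-T} e_{n+1}\| \;\geq\; |\langle e_{n+1}, A^T A^{-T} e_{n+1}\rangle| \;=\; 1
\]
collapses the Jacobian factors into the desired bound $(n+1)/(2-4\theta)$ on the density.

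The main obstacle will be the Jacobian bookkeeping: one must carefully distinguish $n$-dimensional Lebesgue volume on the axis-aligned hyperplane $\{y = C\}$ from $n$-dimensional Hausdorff volume on the tilted hyperplane $H_0$, and correctly derive the separation $1/\|A^{-T} e_{n+1}\|$ between consecutive parallel slices in the coarea step. The closing Cauchy--Schwarz inequality is the slick ingredient that makes these factors cancel into the clean bound stated in the lemma.
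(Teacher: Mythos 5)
Your proof is correct, and it is considerably more careful than the paper's own one-line justification. The paper dispatches Property 2 with the remark that it is a ``direct consequence of Lemma~\ref{geo.lm}, where the hyperplanes are constructed according to the value $|\mathcal{T}(\Delta)|$''. Read literally, this is misleading: the level sets $\{(x,y)\in K : y - f(x) = u\}$ for $u>0$ are vertical translates of the graph of $f$, not hyperplanes, so Lemma~\ref{geo.lm} does not apply to them directly. You close this gap by three observations that the paper leaves implicit. First, the Fubini computation shows the density of $u=y-f(x)$ under the uniform law on $K$ is $p(u)={\rm Vol}_n(\{x:f(x)\le C-u\})/{\rm Vol}(K)$, which is nonincreasing, so it suffices to bound it at $u=0$, where the relevant cross-section \emph{is} a genuine hyperplane slice (the top face $K\cap\{y=C\}$). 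Second, you carry out the Jacobian bookkeeping between the axis-aligned slice of $K$ and the tilted slice $\mathcal{T}(K)\cap H_0$, producing the factor $|\det A|\cdot\|A^{-T}e_{n+1}\|$ from the coarea/parallel-slab computation. Third, the Cauchy--Schwarz inequality $\|Ae_{n+1}\|\cdot\|A^{-T}e_{n+1}\|\ge |\langle Ae_{n+1}, A^{-T}e_{n+1}\rangle| = 1$ collapses the two transformation-dependent factors $c=\|Ae_{n+1}\|$ and $\|A^{-T}e_{n+1}\|$ into exactly the clean constant $\frac{n+1}{2-4\theta}$ the lemma claims. All three steps check out; in particular the separation $1/\|A^{-T}e_{n+1}\|$ between the parallel images $\mathcal{T}(\{y=c_1\})$ and $\mathcal{T}(\{y=c_2\})$ per unit change in $y$ is correct, as is the identification of $A^{-T}e_{n+1}$ (normalized) as the common normal. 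What your route buys over the paper's is a genuinely rigorous derivation of the density bound, whereas the paper's phrasing conceals both the monotonicity reduction and the need to reconcile the two distinct Jacobian factors.
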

\begin{proof}
Affine transformation keeps the ratio on a line and keeps the parallel property. So property 1 is proved. Property 2 is a direct consequence of Lemma \ref{geo.lm}, where the hyperplanes are the constructed according to the value $|\mathcal{T}(\Delta)|$.
\end{proof}

\begin{theorem}[Bound on Oracle in Noisy Case]
\label{noisy.thm}
There exist an event $E$ with probability at least $P(E) \geq 1 - o(1)$. On the event $E$, the expected number of noisy oracle calls is at most $\tilde{\mathcal{O}}(n^{14} \frac{1}{\epsilon^2})$.
\end{theorem}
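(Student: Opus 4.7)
My plan is to combine the noiseless analysis of Theorem~\ref{noiseless.thm} with the adaptive hypothesis test of Lemma~\ref{adapt.query} and the level-set geometry of Lemma~\ref{dist.levelset}, so that on a high-probability event the total cost is simply (number of queries) times (average cost per query).

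First I would define the good event $E$ to be the intersection, over all $T=O(n\log(1/\epsilon))$ epochs and over every query issued by the Ball Walk, of three sub-events: (i) the affine map $\mathcal T_t$ brings $K_t$ into $1/4$-near isotropic position (the Near Isotropic Affine Transformation lemma with $p=\Theta(\log n)$); (ii) the Ball Walk outputs nearly-uniform, nearly-independent samples (Lemma~\ref{mix.ball}); and (iii) every inside/outside call made by the adaptive routine is answered correctly, whenever the underlying vertical gap satisfies $|\Delta|\ge \delta$ for a threshold $\delta$ polynomial in $\epsilon$ and $1/n$ to be chosen. By setting $C=\Theta(\sqrt{\log n})$ with a sufficiently large constant and invoking the remark following Lemma~\ref{adapt.query}, each query errs with probability $o(1/n^\ell)$, so a union bound over the $\tilde O(n^5)$ total queries in the algorithm still gives $P(E)\ge 1-o(1)$.

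Next, on the event $E$, I would bound the expected noisy oracle calls per query. Lemma~\ref{adapt.query} makes $m\asymp C^2/|\Delta|^2$ draws with overwhelming probability, and the doubling schedule contributes only a geometric tail dominated by its first term, yielding per-query conditional expectation $\tilde O(1/|\Delta|^2)$. Taking the outer expectation with respect to the approximately uniform distribution on $\mathcal T_t(K_t)$, Lemma~\ref{dist.levelset} bounds the density of $|\mathcal T_t(\Delta)|$ by $O(n)$, so
\[
\mathbb E\!\left[\frac{1}{|\Delta|^2}\,\mathbf{1}\{|\Delta|\ge\delta\}\right] \le \int_{\delta}^{O(n)} \frac{O(n)}{s^2}\,ds = O\!\left(\frac{n}{\delta}\right),
\]
after tracking the scale factor $c$ relating $|\Delta|$ to $|\mathcal T_t(\Delta)|$, which can accumulate across epochs. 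For queries with $|\Delta|<\delta$ I would cap the sample budget at $m_{\max}=4C^2/\delta^2$, noting that $P(|\Delta|<\delta)=O(n\delta)$ so the expected contribution from this stratum is also $\tilde O(n/\delta)$.

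The hard part will be the choice of the threshold $\delta$: it must be small enough that the $|\Delta|<\delta$ events do not corrupt either the mixing guarantee of Lemma~\ref{mix.ball} or the isotropy estimate across all $\tilde O(n^5)$ queries, yet large enough to keep $\mathbb E[1/|\Delta|^2]$ polynomially bounded in $1/\epsilon$. Propagating the scaling factor $c$ through $\tilde O(n)$ successive reshapings, and verifying that the approximately uniform (rather than exactly uniform) distribution inherited from the previous epoch still satisfies the level-set density bound of Lemma~\ref{dist.levelset} up to $o(1)$ total-variation slack, is what absorbs the extra polynomial factors of $n$. Balancing these constraints with $\delta$ of order $\epsilon^{2}/\mathrm{poly}(n)$ yields per-query cost $\tilde O(n^{9}\epsilon^{-2})$ and, multiplied by the $\tilde O(n^{5})$ queries used by the noiseless template of Theorem~\ref{noiseless.thm}, gives the claimed $\tilde O(n^{14}\epsilon^{-2})$ bound.
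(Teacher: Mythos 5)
Your proposal tracks the paper's own proof almost step for step: define a give-up band near the boundary of the epigraph where the adaptive test cannot be resolved, split the analysis into an error event (point falls in the band or the test misfires) and a good event, bound the per-query cost on the good event by integrating $4C^2/|\Delta|^2$ against the level-set density of Lemma~\ref{dist.levelset}, and multiply by the $\tilde{\mathcal{O}}(n^5)$ queries from Theorem~\ref{noiseless.thm}. This is exactly the paper's decomposition, so I regard the approach as essentially identical.

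One place where your intermediate arithmetic is off (though the final exponent comes out right): you claim $\delta$ of order $\epsilon^2/\mathrm{poly}(n)$. In the paper the threshold $\delta$ on $|\mathcal{T}(\Delta)|$ is chosen purely polynomially small in $n$, namely $\delta = \tilde{\mathcal{O}}(1/n^6)$, so that $n\delta$ accumulated over $\tilde{\mathcal{O}}(n^5)$ queries is $o(1)$. The $\epsilon^{-2}$ does not enter through $\delta$; it enters through the scale factor $c$ in $|\mathcal{T}(\Delta)| = c|\Delta|$, which is bounded by $c \le n/\epsilon$ via \eqref{iso.bound} (the near-isotropic body inscribes a unit ball while $K_t$ still contains an $\epsilon$-cube). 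You flag ``tracking the scale factor $c$'' without pinning it down, but this bound on $c$ is precisely what controls the $\epsilon$-dependence of $\mathbb{E}N_q$, and should be made explicit rather than attributed to the choice of $\delta$.
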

\begin{proof}
	Following \citet[Theorem 4.1 Remark, Theorem 4.4]{kannan1997random}, samples from the random walk can achieve closeness to the uniform distribution in total variation sense very quickly. More explicitly, if we need $T$ steps to achieve a precision $\epsilon$, then $\epsilon/n^{10}$ can be achieved in almost the same number of steps: we lose only a factor of $O(\log n)$. Thus precision in total variation sense is not a crucial issue in sampling, and we can always assume the samples are drawn from the uniform distribution.

Next, a $1/4$-near isotropic position behaves like isotropic position in our complexity analysis: no additional $n$ factor is involved, and the only difference is in terms of the constant. Therefore, without loss of generality, we may assume the convex body is in the isotropic position after the transformation $\mathcal{T}$.

As we can see in Lemma \ref{adapt.query}, if the current query point is far away from the boundary along the vertical direction $\Delta$ (that is, $|\Delta|$ is large, which is equivalent to $|\mathcal{T}(\Delta)|$ being large), we can tell whether  or not the point is inside with high confidence within $\frac{4C^2}{|\Delta|^2}$ oracle calls. On the one hand, as the query point approaches the boundary, the number of oracle calls goes to infinity. One the other hand, the probability of getting very close to boundary is small. Thus in terms of theoretical analysis, there is a trade-off between whether we want to spend more oracle calls at a given point that is close to boundary or, alternatively, put the probability of the point being close to boundary into bad event scenario and thus save the oracle calls. Hence in analyzing the algorithm, there is a trade-off that determines the best point where we ``give up''. We will call this ``give up band'' as $\delta$-boundary in the following proof, as illustrated in Figure~\ref{fig:graphics_band}. We remark that after the $\mathcal{T}$ transformation, the direction in which we obtain noisy information is not vertical, but this does not impact the analysis.

\begin{figure}[h]
	\centering
		\includegraphics[width=1.6in]{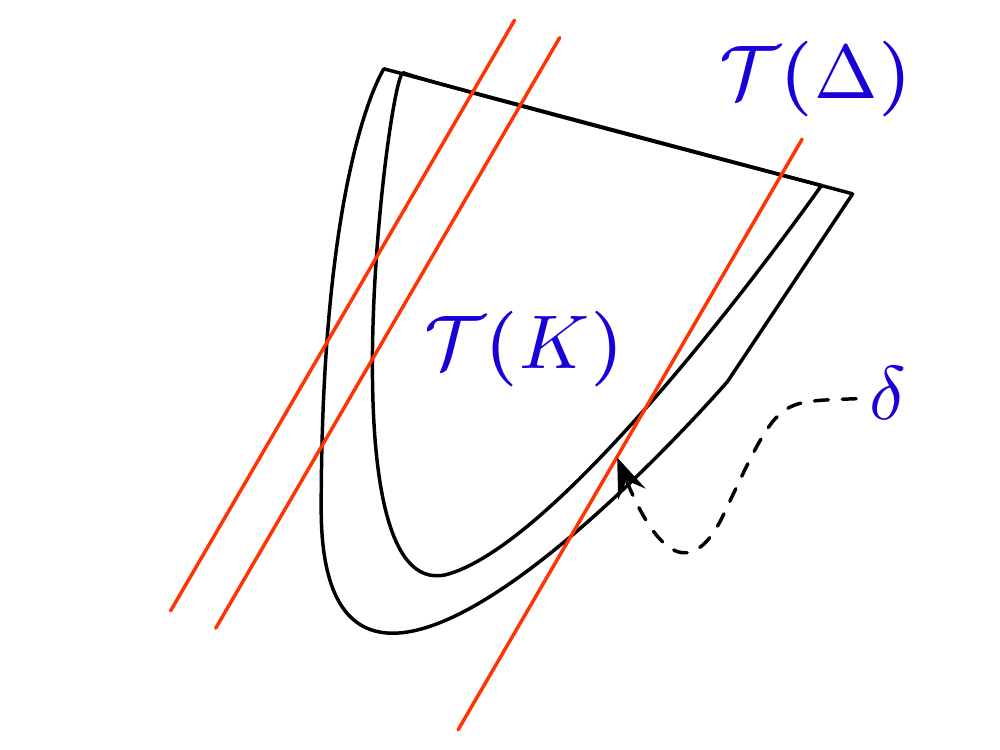}
	\caption{$\delta$-boundary illustration: the red lines denote the direction of the vector  $\mathcal{T}(\Delta)$, which is the direction in which we can do adaptive querying. Because of the affine transformation $\mathcal{T}$, it is not necessarily vertical. }
	\label{fig:graphics_band}
\end{figure}

According to Lemma \ref{dist.levelset}, we have $|\mathcal{T}(\Delta)| = c \cdot |\Delta|$, where $c$ is a constant factor. Because of \eqref{iso.bound}, and since an $\epsilon$ cube is inscribed at the final epoch, $c \leq \frac{n}{\epsilon}$. We now define a $\delta$-boundary as the mass $|\mathcal{T}(\Delta)| \leq \delta$. The procedure (for the purposes of analysis only) will give up on the $|\Delta| = y - f(x) = | \mathcal{T}(\Delta) | /c \leq \delta/c$ band and put this mass in our error term. (This giving up on the area near the boundary can be seen as the main source of looseness in the analysis, but we do not know how to avoid it).  
Then the error probability can be written in two parts, one coming from the probability of giving up in the band, the other coming from the statistical error of adaptive testing procedure. More precisely, 
\begin{align} 
P({\rm Error}) & \leq  P(|\mathcal{T}(\Delta)| \leq \delta) + P(E^c, |\mathcal{T}(\Delta)| \geq \delta)\\
& \leq \int_0^\delta d P(|\mathcal{T}(\Delta)|) + P(E^c, |\Delta| \geq \delta/c)\\
&  \leq n \delta + \left( \log_2 \frac{4C^2}{\delta^2 / c^2} +1\right) \cdot \exp\left(-\frac{C^2}{2}\right)
\end{align}
where event $E^c$ is defined in the same way as in Lemma \ref{adapt.query}.

Since the above error is per one step of the random walk, we need to take $\delta = \tilde{\mathcal{O}}(1/n^6)$ so that $n\delta$ accumulated after $\tilde{\mathcal{O}}(n^5)$ steps is $o(1)$. We have that $\frac{4C^2}{\delta^2 / c^2}$ is at most polynomial in terms of $n$. Hence, for $C = O(\sqrt{\log n})$ with constant big enough, we can ensure $P({\rm Error}) =  o(1/n^5)$. This last statement follows from Lemma \ref{dist.levelset}; we remark that this bound is sharp because when the convex body is a high dimensional cone, the bound is exact. This is the error for each point we query as in noiseless case. The total number of query in noiseless case is $\tilde{\mathcal{O}}(n^5)$, thus the total error behaves as $o(1)$ by choosing $\delta$ small and $C$ big.

On the complement of the ``Error" event, we have each step query complexity is bounded by $m = \frac{4C^2}{|\Delta|^2}$ (Lemma~\ref{adapt.query}). Thus the expected number of queries is $$\int_\delta^1 \frac{4C^2}{|\Delta|^2} d \mathcal{P}(|\mathcal{T}(\Delta)|).$$ Because of the uniform distribution on the convex body, the expected number of queries $\mathbb{E} N_q$ for each point is bounded by 
\begin{equation}
\mathbb{E} N_q  = \int_\delta^1 \frac{4C^2}{|\Delta|^2} d \mathcal{P}(|\mathcal{T}(\Delta)|) \leq \int_\delta^1 \frac{4C^2}{s^2 / c^2} \cdot  \frac{n+1}{2} d s \leq \tilde{\mathcal{O}}\left( n^9 \cdot \frac{1}{\epsilon^2}\right).
\end{equation}
(by Lemma \ref{dist.levelset} , the distribution of $|\mathcal{T}(\Delta)|$ has the relation $d \mathcal{P}(s) \leq \frac{(n+1)}{2} ds$.)
We conclude that the total number of oracle queries is $\tilde{\mathcal{O}}(n^5) \cdot \mathbb{E} N_q$ , which is $\tilde{\mathcal{O}}(n^{14} \frac{1}{\epsilon^2})$.
\end{proof}\\

\bibliographystyle{apalike}
\bibliography{Stochastic-Convex-Optimization}

\end{document}